\documentclass[journal]{IEEEtran}
\usepackage{amsmath,amssymb,amsfonts,amsthm}
\usepackage{booktabs,array,graphicx,tikz,listings}
\usepackage[hidelinks]{hyperref}
\usepackage{xurl}
\usetikzlibrary{arrows.meta,positioning}
\newtheorem{proposition}{Proposition}
\begin{document}
\raggedbottom
\title{T-GADE: Thermodynamical Generative-AI-Driven\\Evolution of LLM Artifacts}
\author{Kyoko~Ogawa and Naoki~Mori%
\thanks{K. Ogawa and N. Mori are with the Graduate School of Informatics, Osaka Metropolitan University, Osaka, Japan (e-mail: si26145c@st.omu.ac.jp; mnao@omu.ac.jp).}}
\maketitle
\begin{abstract}
Integrating evolutionary computation and large language models (LLMs) requires control of population diversity as well as generative capability. Among LLM outputs, those with explicit structure, such as a description paired with code, are structured artifacts; we use artifact for short. We propose T-GADE, which evolves these artifacts by extending thermodynamical genetic algorithms through LLM-based genetic operators and artifact-level diversity evaluation. A common free-energy objective supports generational and steady-state updates, with Fermi-type occupancy excluding repeated genotypes and Bose-type occupancy permitting them. We establish exact one-member removal and conditions for recovering the zero-temperature survival rule of Evolution of Heuristics (EoH). On the online bin-packing task studied in the EoH paper, excess measures relative bin-count overhead above a volume lower bound. Training excess uses search instances; transfer excess uses instances with another bin capacity. Generational Bose-type T-GADE at $T=0.003$ reduced median training excess by approximately 29\%, from 1.152\% to 0.815\%, over 20 runs per configuration (two-sided Mann--Whitney $p=0.042$, Cliff's $\delta=0.378$). Validation selection among its two highest-ranked final candidates reached the same median transfer excess as EoH, 0.496\%. These results demonstrate the utility of thermodynamical selection and validation-based use of retained artifacts.
\end{abstract}
\begin{IEEEkeywords}
Evolutionary computation, large language models, thermodynamical selection, diversity control, program evolution.
\end{IEEEkeywords}

\section{Introduction}
Large language models (LLMs) broaden problem solving through multiple models and generated solutions. Evolutionary computation searches populations through genetic operators and selection. LLM-based semantic modification complements this exploration, extending evolution to descriptions and programs~\cite{elm2023,evoprompt2024,promptbreeder2024}.

Three perspectives organize this relationship: (1) evolutionary optimization of model weights or architecture; (2) LLMs as genetic operators or fitness evaluators; and (3) LLMs or their outputs as evolving individuals. Perspectives (2) and (3) concern an LLM's role and the evolving object, respectively, and can coexist. LLMs can modify artifacts and evaluate qualities difficult to quantify. We adopt (2) and (3): LLMs implement genetic operators, and their structured outputs form individuals. Experimental fitness comes from program execution, not an LLM judge.

Artifact evolution requires explicit diversity design: different program texts may behave identically. What differences constitute diversity, how much is useful, and how should survival control it?

We propose T-GADE, Thermodynamical Generative-AI-Driven Evolution of LLM Artifacts. Thermodynamical genetic algorithms (TDGAs) balance quality and population diversity through temperature~\cite{mori1995}. T-GADE extends their free-energy principle through LLM-based genetic operators and artifact-level diversity functions, separating representation, evaluation, and variation from survival.

Generational and steady-state updates replace individuals at different rates~\cite{dejong1993gap}. T-GADE encompasses both through one free-energy objective. Fermi-type and Bose-type occupancy draw on single occupation in Fermi--Dirac statistics and repeated occupation in Bose--Einstein statistics, respectively. Update form, occupancy, and temperature are separate design choices.

We compare with Evolution of Heuristics (EoH), which evolves explanations and executable heuristics~\cite{eoh2024}, on its paper's online bin-packing task. With common filtering and tie-breaking, T-GADE recovers EoH's zero-temperature survival rule under exclusion by recorded objective value. Section~\ref{sec:experiment} distinguishes inherited conditions from our settings.

Our main contributions are:
\begin{enumerate}
\item We propose a free-energy framework encompassing generational and steady-state updates and Fermi-type and Bose-type occupancy. We establish the optimality of one-member removal and conditions for recovering EoH's survival rule.
\item We show improved training performance for generational Bose-type T-GADE against EoH over 20 runs per configuration. Additional controls examine the roles of iterative variation and quality-based selection.
\item We analyze survival decisions and final populations, distinguishing diversity control during search from the use of validation data to choose a retained artifact after search.
\end{enumerate}

\section{Related Work}
\subsection{LLMs and evolutionary computation}
For perspective (1), evolutionary model merging searches weights and information flow among pretrained models to construct a new model~\cite{akiba2025merge}. The following discussion focuses on perspectives (2) and (3): LLMs as components of search and generated artifacts as individuals.

Evolution through Large Models mutates programs with an LLM~\cite{elm2023}; EvoPrompt evolves natural-language prompts~\cite{evoprompt2024}; Promptbreeder evolves prompts and mutation instructions~\cite{promptbreeder2024}. These methods make meaningful descriptions material for inheritance and variation.

For program evolution, FunSearch links LLM proposals with execution-based evaluation and an island-based program database~\cite{funsearch2024}. EoH represents a heuristic through its explanation and code~\cite{eoh2024}; ReEvo uses reflection on evaluation results to guide subsequent proposals~\cite{reevo2024}. LLaMEA generates, modifies, and selects metaheuristics using execution feedback~\cite{llamea2025}.

\subsection{Exploration and diversity control}
HSEvo uses diversity measurements and Harmony Search for improvement~\cite{hsevo2025}. MCTS-AHD organizes generated heuristics in a search tree and continues refinement from intermediate candidates~\cite{mctsahd2025}. EoH-S evolves complementary sets of heuristics and evaluates the best member for each instance~\cite{eohs2026}. These approaches expand exploration through local refinement, preservation of search history, and set-level evaluation, respectively. LLM-LNS evolves heuristic strategies and the prompts that guide their evolution, addressing diversity on the generation side~\cite{llmlns2025}.

Novelty Search selects for behavioral novelty~\cite{novelty2011}, and MAP-Elites retains high-quality solutions across regions of a behavior space~\cite{mapelites2015}. In-context QD uses a diverse archive to condition LLM generation~\cite{incontextqd2024}. Determinantal point processes use similarity information to favor diverse subsets~\cite{dpp2012}.

TDGA incorporates population entropy into a temperature-weighted objective for survival selection~\cite{mori1995}. Adaptation to changing environments and temperature control through Feedback TDGA have also been studied, whereas this work uses fixed temperatures~\cite{mori1996,mori1998}. T-GADE extends this lineage to evolutionary computation with LLM-based genetic operators, artifact-level diversity, and explicit occupancy rules.

\section{T-GADE}
\label{sec:methods}
\subsection{Individuals and evaluation}
Structured artifacts, or artifacts for short, are LLM outputs with explicit structure, such as a description paired with code. T-GADE represents each artifact as an individual. An LLM generates initial individuals from a task specification and modifies evaluated parents through crossover and mutation. In this experiment, the genotype pairs a natural-language explanation with executable code; the phenotype comprises behavior and objective value from execution on the training instances.

A task-specific evaluator assigns energy $E(x)$ to individual $x$, with lower values representing better quality. Feature vectors $\phi_\ell(x)$, $\ell=1,\ldots,m$, measure artifact differences. Explanation and code are inherited; features are measured after generation.

\subsection{Genetic operators}
Variation is independent of survival. For a fair comparison, we reuse four operator templates from the EoH paper; T-GADE is not restricted to them. E1 proposes an algorithm different in form from two parents. E2 uses their shared idea to create a different form. M1 changes one parent's structure; M2 changes its parameters. Each prompt supplies parent explanations and code and requests a new explanation and code. E1 and E2 are crossover-like; M1 and M2 are mutation-like.

Operators are chosen with equal probability. Parents are sampled with replacement according to quality rank, so a two-parent proposal may select the same individual twice. We do not use post-generation consistency repair or additional mutation in this experiment.

\subsection{Free-energy minimization}
The minimum-free-energy principle describes thermal equilibrium under fixed constraints by
\begin{equation}
F=\langle E\rangle-T H, \label{eq:tdga}
\end{equation}
where $\langle E\rangle$ is mean energy, $H$ is entropy, and $T$ is temperature. TDGA applies this principle to population selection: mean energy measures fitness, and entropy computed from allele frequencies measures population diversity~\cite{mori1995}. T-GADE retains this quality--diversity balance while defining diversity over artifacts.

Let $N$ be the target population size and $\mathcal{P}$ a nonempty population of size $n=|\mathcal{P}|$. Repeated copies count as separate individuals. Let $X_\ell(\mathcal{P})$ contain the feature vectors as rows, and define $G_\ell(\mathcal{P})=X_\ell(\mathcal{P})X_\ell(\mathcal{P})^\top$. We define
\begin{equation}
\begin{aligned}
F_T(\mathcal{P})&=\langle E\rangle_{\mathcal{P}}-T H(\mathcal{P}),\\
\langle E\rangle_{\mathcal{P}}&=\frac{1}{n}\sum_{x\in\mathcal{P}}E(x),\\
H(\mathcal{P})&=\frac{1}{n}\sum_{\ell=1}^{m}\log\det\!\left(G_\ell(\mathcal{P})+\varepsilon I_n\right).
\end{aligned}\label{eq:free-energy}
\end{equation}
Here $T\geq0$, $\varepsilon>0$ regularizes the Gram matrix, and $I_n$ is the identity. With normalized features and fixed size, $H$ measures joint artifact diversity per individual. Selection uses quality alone at $T=0$ and includes diversity at $T>0$. Selection temperature weights this deterministic trade-off; its meaning depends on energy and feature scales and regularization. It is distinct from LLM sampling temperature.

\subsection{Fermi-type and Bose-type occupancy}
Single occupation in Fermi--Dirac statistics and repeated occupation in Bose--Einstein statistics motivate two population rules. Fermi-type occupancy permits at most one individual per genotype; Bose-type occupancy permits repeated genotypes. Genotype identity requires identical explanations and code, not code equality alone.

Generational Bose-type selection may select the same candidate repeatedly to fill population slots. Steady-state Bose-type selection admits duplicate genotypes already in its input; it does not generate copies during selection. Both update forms can also use the Fermi-type rule. The EoH comparison uses a separate rule that retains one individual per recorded objective value, regardless of genotype.

\begin{figure}[t]
\centering
\begin{tikzpicture}[font=\footnotesize,box/.style={draw,rounded corners=2pt,align=center,text width=3.55cm,minimum height=8mm},>=Stealth]
\node[box] (p) {Population: explanation + code};
\node[box,below=5mm of p] (v) {LLM variation $\rightarrow$ evaluation};
\node[box,below=5mm of v,text width=6.7cm] (s) {Quality $E$ + diversity $H$\\Survival minimizes $F_T=\langle E\rangle-T H$};
\node[box,below=7mm of s,xshift=-1.8cm,text width=2.95cm] (g) {Generational\\Build the next population};
\node[box,below=7mm of s,xshift=1.8cm,text width=2.95cm] (c) {Steady-state\\Remove one individual};
\draw[->] (p)--(v);\draw[->] (v)--(s);\draw[->] (s)--(g);\draw[->] (s)--(c);
\end{tikzpicture}
\caption{Two population updates under one free-energy principle.}
\label{fig:framework}
\end{figure}
Figure~\ref{fig:framework} links LLM variation and evaluation to the two population-update procedures; either occupancy rule can be used with either update.

\subsection{Generational update}
The generational form freezes the parent population, proposes several children, and constructs a new population from evaluated parents and children. Here, $N$ parents and $N+1$ proposed children form the candidate pool $Q$. Selection first takes a minimum-energy candidate. Each subsequent addition minimizes
\begin{equation}
\begin{aligned}
\Delta F_T(y\mid\mathcal{P})
&=F_T(\mathcal{P}\uplus\{y\})-F_T(\mathcal{P})\\
&=\frac{E(y)-T\sum_{\ell=1}^{m}\log d_\ell(y)-F_T(\mathcal{P})}{n+1}.
\end{aligned}
\label{eq:increment}
\end{equation}
Here $n=|\mathcal{P}|\geq1$, $d_\ell(y)$ is the Schur complement in Appendix~\ref{sec:schur}, and $\uplus$ counts a selected copy as another individual. For a fixed prefix, minimizing~\eqref{eq:increment} is equivalent to minimizing $E(y)-T\sum_\ell\log d_\ell(y)$. Fermi-type selection excludes every candidate with the selected genotype from subsequent additions; Bose-type selection leaves candidates available.

\par\noindent\begin{minipage}{\columnwidth}
\begin{lstlisting}
def generational_update(
        population, N, T, fermi_type):
    parents = list(population)
    plans = plan_offspring(parents, count=N + 1)
    children = generate_in_plan_order(plans)
    pool = evaluate_valid(parents + children)
    return greedy_survivors(pool, N, T, fermi_type)
\end{lstlisting}
\end{minipage}\par
\par\noindent\begin{minipage}{\columnwidth}
\begin{lstlisting}
def greedy_survivors(pool, N, T, fermi_type):
    available = list(pool)
    if fermi_type:
        require_distinct_genotypes(available, N)
    else:
        require_count(available, 1)
    first = min(available, key=energy)
    selected = [first]
    if fermi_type:
        remove_genotype(available, first)
    while len(selected) < N:
        current = free_energy(selected, T)
        def cost(x):
            updated = free_energy(selected + [x], T)
            return updated - current
        child = min(available, key=cost)
        selected.append(child)
        if fermi_type:
            remove_genotype(available, child)
    return materialize_copies(selected)
\end{lstlisting}
\end{minipage}\par
Planning assigns parents and operators before parallel generation; results enter in plan order. Invalid candidates are excluded, and minima preserve input order on ties. Genotype removal compares explanation and code. Fermi-type construction requires $N$ distinct valid genotypes.

Given feature inner products, the Schur complement evaluates each diversity increment in $O(|\mathcal{P}|^2)$ operations per block (Appendix~\ref{sec:schur}). Exact selection of $N$ individuals requires combinatorial search, motivating greedy construction. The unnormalized log-determinant sum is submodular on a fixed labeled candidate set; subtracting modular energy preserves submodularity, but not necessarily monotonicity. The known greedy guarantee for monotone submodular maximization under a cardinality constraint does not directly cover this construction with normalization, quality-based initialization, and occupancy constraints. A global approximation guarantee remains future work.

\subsection{Steady-state update}
The steady-state form updates the population when a new candidate finishes evaluation. Each asynchronous generation uses the population available at its start, and registrations are serialized in evaluation-completion order. A new individual is also eligible for removal.

Candidates are filtered by the chosen occupancy rule: first per genotype for Fermi-type occupancy, no deduplication for Bose-type occupancy, or first per objective value for the EoH rule. If more than $N$ candidates remain, it removes the individual that minimizes the free energy of the survivors:
\begin{equation}
x^*\in\arg\min_{x\in Q}F_T(Q\setminus\{x\}).\label{eq:removal}
\end{equation}
\par\noindent\begin{minipage}{\columnwidth}
\begin{lstlisting}
def continuous_survivors(pool, N, T, occupancy):
    pool = [x for x in pool if evaluated(x)]
    if occupancy == "fermi":
        pool = first_per_genotype(pool)
    elif occupancy == "objective":
        pool = first_per_objective(pool)
    if T > 0 and len(pool) > N:
        pool = [x for x in pool
                if valid_features(x)]
    while len(pool) > N:
        costs = []
        for j in range(len(pool)):
            survivors = pool[:j] + pool[j + 1:]
            costs.append(free_energy(survivors, T))
        j = removal_index(costs, pool)
        pool.pop(j)
    return sorted(pool, key=objective)
\end{lstlisting}
\end{minipage}\par
Ties in survivor free energy remove the larger objective first, then the later arrival. Positive-temperature removal excludes candidates without valid features. If fewer than $N$ candidates remain, all are retained.

\begin{proposition}[One-member removal]
Fix a pool of $N+1$ labeled individuals for which every one-member removal is admissible. Rule~\eqref{eq:removal} minimizes free energy over all its $N$-member subpopulations.
\end{proposition}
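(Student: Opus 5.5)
The plan is a bijection argument between removals and subpopulations, followed by a check that the rule's tie-breaking cannot select a non-minimizer. Fix the labeled pool $Q$ with $|Q|=N+1$. Because individuals are labeled, repeated genotypes count as distinct members, and two subsets differ exactly when their label sets differ.

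First, I would show that removals and $N$-member subpopulations correspond one to one. Every $N$-member subpopulation $S\subseteq Q$ misses exactly one labeled member, so $S=Q\setminus\{x\}$ for a unique $x\in Q$. Conversely, each $x$ gives such an $S$. Hence the map $x\mapsto Q\setminus\{x\}$ is a bijection from $Q$ onto the family $\binom{Q}{N}$ of $N$-member subpopulations. It follows that
\[
\min_{x\in Q}F_T(Q\setminus\{x\})=\min_{S\in\binom{Q}{N}}F_T(S).
\]
Both minima exist because the sets are finite and nonempty.

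Next, I would check that $F_T$ is well defined and finite on every such $S$. Since $N\geq1$, each $S$ is nonempty, so the average $\langle E\rangle_S$ is defined. Each $G_\ell(S)$ is positive semidefinite, so $G_\ell(S)+\varepsilon I_N$ is positive definite with positive determinant, and $\log\det$ is finite. Energies and feature vectors are finite wherever defined. The admissibility hypothesis guarantees that every removal is allowed: no occupancy filter forbids it, and at $T>0$ all members carry valid features. The feasible set of the rule is therefore exactly $\binom{Q}{N}$, with no candidate silently excluded.

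Finally, the rule returns some $x^*$ in the $\arg\min$. Tie-breaking, first by larger objective and then by later arrival, only chooses among minimizers, so $F_T(Q\setminus\{x^*\})$ attains the global minimum over $\binom{Q}{N}$. Exact enumeration needs only $N+1$ evaluations, which explains why no greedy approximation is needed here, unlike the generational construction.

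The only genuinely delicate step is the admissibility and labeling bookkeeping. If duplicates were identified, or some member's features were invalid, the bijection would fail or the feasible sets would differ. I would state explicitly that the pool is taken after filtering and that the hypothesis covers exactly this.
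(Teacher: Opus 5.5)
Your proposal is correct and follows the paper's own argument: each $N$-member subpopulation is the complement of exactly one labeled member, so comparing all $N+1$ one-member removals covers every candidate subpopulation. Your extra checks go beyond the paper's two-line proof but are sound. These are that $F_T$ is finite because $\varepsilon>0$ makes each regularized Gram matrix positive definite, and that the tie-breaking rule chooses only among exact minimizers.
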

\begin{proof}
Each $N$-member subpopulation is the complement of exactly one labeled individual. The rule compares all such complements.
\end{proof}
Removal from a larger pool is sequential; the proposition concerns one removal from a fixed pool.

\subsection{Relation to EoH}
The EoH population manager excludes failed evaluations, keeps the first candidate at each recorded objective value, and retains up to $N$ candidates in ascending objective order.
\par\noindent\begin{minipage}{\columnwidth}
\begin{lstlisting}
def eoh_survivors(pool, N):
    valid = [x for x in pool if evaluated(x)]
    unique = first_per_objective(valid)
    return sorted(unique, key=objective)[:N]
\end{lstlisting}
\end{minipage}\par
\par\noindent\begin{minipage}{\columnwidth}
\begin{lstlisting}
def first_per_objective(pool):
    seen = set()
    kept = []
    for x in pool:
        if objective(x) not in seen:
            seen.add(objective(x))
            kept.append(x)
    return kept
\end{lstlisting}
\end{minipage}\par
\begin{proposition}[Zero-temperature identity]
For the same ordered candidates, failure filtering, and first-per-objective filtering, suppose energy is a nondecreasing function of objective. In exact arithmetic, steady-state selection at $T=0$ returns the same ordered survivors as EoH, provided energy ties remove the larger objective first and output is sorted by objective.
\end{proposition}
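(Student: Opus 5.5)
After the shared failure filter and first-per-objective filter, both procedures act on the same ordered list $U$ of candidates, and these candidates have pairwise distinct objective values. If $|U|\le N$, both procedures keep every member of $U$. EoH's slice \texttt{[:N]} keeps everything, and \texttt{continuous\_survivors} never enters its removal loop. The feature-validity filter applies only when $T>0$, so it is inactive here. Both procedures then sort by objective, and since the objectives are distinct the sorted order is unique, so the outputs coincide. From now on I assume $|U|>N$.

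At $T=0$ the free energy of a survivor set $S$ is $F_0(S)=\langle E\rangle_S$. Each removal shrinks the pool from size $k$ to $k-1$, and every candidate survivor set has size $k-1$. So removing $x$ gives $F_0(Q\setminus\{x\})=(\sum_{y\in Q}E(y)-E(x))/(k-1)$, and minimizing this over $x$ is the same as maximizing $E(x)$. The rule therefore removes a maximum-energy individual. Because $E$ is a nondecreasing function of the objective, the individual with the largest objective has maximal energy. Any other maximal-energy individual has a smaller objective, and the tie rule ``remove the larger objective first'' picks the largest-objective one among the energy ties. The objectives are distinct, so this choice is unique, and the later-arrival tie-breaker is never used. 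In exact arithmetic the energy ties are real equalities, not rounding artifacts, which is why the statement assumes exact arithmetic.

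By induction on the number of removals, each step removes the current maximum-objective element. After $|U|-N$ steps the survivors are exactly the $N$ members of $U$ with the smallest objectives. EoH's \texttt{sorted(unique)[:N]} also takes the $N$ smallest objectives of the same $U$. Both outputs are sorted by objective, and distinct objectives make the sorted order unique, so the ordered lists are identical.

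The main obstacle is the tie analysis. The rest is bookkeeping: checking that the $T>0$ feature filter and the $\varepsilon$-regularized entropy term drop out when $T=0$, and that the steady-state ``pool'' means the same ordered candidate set that EoH receives. The point to make carefully is the one above. Nondecreasing rather than strictly increasing energy creates energy ties, and these are resolved by the objective tie-breaker. Distinct objectives, guaranteed by first-per-objective filtering, make that tie-breaker decisive.
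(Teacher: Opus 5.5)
Your proof is correct and follows the paper's own argument. At $T=0$, removal minimizes the survivors' mean energy, so it removes a maximum-energy individual; the objective tie-break resolves plateaus of the nondecreasing energy map; and repeated removal leaves the $N$ lowest objectives, which the final sort orders exactly as EoH does. You also spell out what the paper leaves implicit: the $|U|\le N$ case, the inactivity of the feature filter and entropy term at $T=0$, and how distinct objectives (from first-per-objective filtering) make each removal and the sorted output unique.
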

\begin{proof}
At zero temperature, minimizing the mean energy of a fixed-size survivor population is equivalent to removing an individual with the largest energy, that is, a worst individual. Objective-based tie-breaking resolves plateaus of the energy map. Repetition leaves the lowest objective values, and stable output ordering gives the same result.
\end{proof}
T-GADE thus contains EoH's survival rule at zero temperature. This identity concerns common input and does not guarantee identical search trajectories.

\section{Experimental Settings}\label{sec:experiment}
\subsection{Task and measurements}
We adopt the online bin-packing task, evaluator, and E1/E2/M1/M2 prompt templates from the public EoH implementation. Items arrive sequentially and cannot be reassigned. A generated priority function chooses a feasible bin for each item; fewer bins are better. Training uses five Weibull instances, each with 5,000 items and bin capacity 100. For instance $j$, let $b_j(x)$ be the number of bins used by program $x$ and $L_j$ its volume lower bound. Training excess is
\begin{equation}
r(x)=\frac{\sum_{j=1}^{5}[b_j(x)-L_j]}{\sum_{j=1}^{5}L_j},
\qquad E(x)=\frac{\min(r(x),2)}{2}.\label{eq:training}
\end{equation}
Separate sets of five instances with capacity 100 and capacity 500 provide confirmation and transfer evaluations. Training inherits the EoH evaluator's ratio of sums; confirmation and transfer average the instance-wise excess ratios $q_j(x)=[b_j(x)-L_j]/L_j$.

We use EoH's task and operators to compare population-management frameworks under a common generation mechanism.

Diversity uses one L2-normalized, 10,000-dimensional feature vector formed by concatenating fill ratio and tightness rank over 5,000 placement decisions on the first training instance, with $\varepsilon=0.001$. Fill ratio measures occupancy after placement; tightness rank is the normalized position among feasible bins ordered by residual capacity.

\subsection{Comparison conditions}
Table~\ref{tab:settings} separates the inherited task from the common conditions of this experiment and lists the six tested configurations. We use Qwen3-32B from a fixed provider, the same eight initial individuals, and the same four operator templates. Sampling temperature is 0.8 and the output limit is 2,048 tokens. Generated programs run in isolation. A lexical gate rejects recognized random-number APIs; it does not remove stochasticity from LLM generation. The timeout is 30 seconds for training and 180 seconds per instance for confirmation and transfer.

\begin{table*}[t]\centering\small
\caption{Comparison conditions and tested configurations.}\label{tab:settings}
\begin{tabular}{@{}lll@{}}\toprule
Condition & Original EoH experiment & This comparison\\\midrule
Task / evaluator & Online bin packing / execution & Inherited from EoH\\
Generation model & GPT-3.5-turbo & Qwen3-32B, fixed provider\\
Population / crossover parents & 20 / 5 & 8 / 2\\
Operators & E1,E2,M1,M2,M3 & E1,E2,M1,M2\\
Generation budget & 20 generations, about 2,000 calls & 800 steady-state; 792 generational\\
Training / endpoint timeout & --- & 30 s / 180 s per instance\\
Initial population / runs & --- & Shared eight individuals / 20 per setting\\\bottomrule
\end{tabular}\par\vspace{5pt}
\begin{tabular}{@{}lllr@{}}\toprule
Method & Update & Occupancy & $T$\\\midrule
EoH & Steady-state & One per objective value & ---\\
T-GADE & Steady-state & One per objective value (EoH rule) & $10^{-5}$\\
T-GADE & Steady-state & Bose-type & 0.003\\
T-GADE & Generational & Fermi-type & 0\\
T-GADE & Generational & Bose-type & 0.003\\
T-GADE & Generational & Bose-type & 0.03\\\bottomrule
\end{tabular}\end{table*}

EoH and steady-state T-GADE share the generation and evaluation loop, differing in survival selection. They use two generation and two evaluation workers. Generational configurations use four workers for each stage and plan parent/operator assignments before dispatch. Each update freezes eight parents and proposes nine children before building the next population. The initial population is evaluated without applying survivor filtering; steady-state updates start when the first new candidate is registered.

The generation budget is 800 logical requests for steady-state runs and $88\times9=792$ for generational runs (a 1\% smaller budget). All configurations share the model and evaluator; parallel registration follows each update procedure. Appendix A distinguishes generation requests, candidate proposals, and transport retries.

We use 20 runs for each main configuration. Training endpoints are the history-best individual for steady-state runs and the best individual in the final population for generational runs. Final-population selection uses the same procedures across all configurations.

Additional controls are independent generation, which generates candidates without parents and returns the best within budget, and evolution without quality selection. The latter uses the common operators, uniformly selects parents without replacement, and retains the eight most recent valid candidates. Both use 800 calls and the same ten seeds as a matching EoH subset.

The main comparison follows the run-level procedure below. Each search starts from the shared initial population; final-population evaluation follows the selection procedures in the next subsection.
\par\noindent\begin{minipage}{\columnwidth}
\begin{lstlisting}
def compare(initial, configs, seeds):
    for config in configs:
        for seed in seeds:
            run = search(
                list(initial), config, seed)
            if config.update == "generational":
                endpoint = min(run.population,
                               key=objective)
            else:
                endpoint = run.history_best
            record_training(objective(endpoint))
            record_final_population(run.population)
\end{lstlisting}
\end{minipage}\par

\subsection{Aggregation and statistical comparison}
The unit of analysis is one search run. We report median excess, interquartile range (IQR), and counts exceeding 2\% or reaching at most 1\% training excess. Quartiles use linear interpolation. Summaries use unrounded values. Rank comparisons round excess fractions to five decimal places, which preserves all ranks and ties in this comparison. The comparison of generational Bose-type T-GADE at $T=0.003$ with EoH uses a two-sided Mann--Whitney test with tie and continuity correction. Positive Cliff's delta indicates lower excess for T-GADE. The six-setting comparison is not a complete factorial experiment.

\subsection{Choosing an individual from the final population}
For each run, consider the top $k\in\{1,\ldots,8\}$ final individuals by training objective. Training selection uses the first. Validation selection uses one fixed capacity-500 instance to choose a candidate; both are scored on the other four tests. The pseudocode returns their four-test means and the test-informed score; medians are then taken across runs.
\par\noindent\begin{minipage}{\columnwidth}
\begin{lstlisting}
def final_scores(population, k):
    top = sorted(population, key=objective)[:k]
    train = top[0]
    valid = min(top, key=validation_excess)
    informed = min(top, key=all_five_mean)
    return (four_test_mean(train),
            four_test_mean(valid),
            all_five_mean(informed))
\end{lstlisting}
\end{minipage}\par

Test-informed selection chooses one individual by its mean excess on all five transfer instances and reports that same mean. This is a diagnostic using its scoring set for selection, not a per-instance ensemble oracle. Its five-instance scores are not bounds on the four-instance validation-selected scores. The fraction of final-population slots whose five-instance mean exceeds 10\% describes population composition; repeated occupancy is counted, but slots are not independent search runs. Evaluation failures receive the worst excess in the relevant evaluation bank rather than being dropped.

\section{Results}
\subsection{Training performance}
\begin{table*}[t]
\caption{Training performance, 20 runs per configuration.}
\label{tab:pooled-training}\centering\small
\begin{tabular}{@{}lllrrrrr@{}}\toprule
Method & Update & Occupancy & $T$ & Median (\%) & IQR (pp) & $>2\%$ & $\leq1\%$\\\midrule
T-GADE & Generational & Bose-type & 0.003 & \textbf{0.815} & \textbf{0.370} & 2/20 & \textbf{15/20}\\
T-GADE & Generational & Fermi-type & 0 & 1.072 & 0.506 & 2/20 & 9/20\\
T-GADE & Generational & Bose-type & 0.03 & 1.056 & 0.493 & 3/20 & 8/20\\
T-GADE & Steady-state & Bose-type & 0.003 & 0.971 & 0.558 & 2/20 & 10/20\\
EoH & Steady-state & --- & --- & 1.152 & 0.586 & 4/20 & 8/20\\
T-GADE & Steady-state & EoH rule & $10^{-5}$ & 1.051 & 0.428 & 1/20 & 9/20\\\bottomrule
\end{tabular}\end{table*}
\begin{figure*}[t]\centering\includegraphics[width=\textwidth]{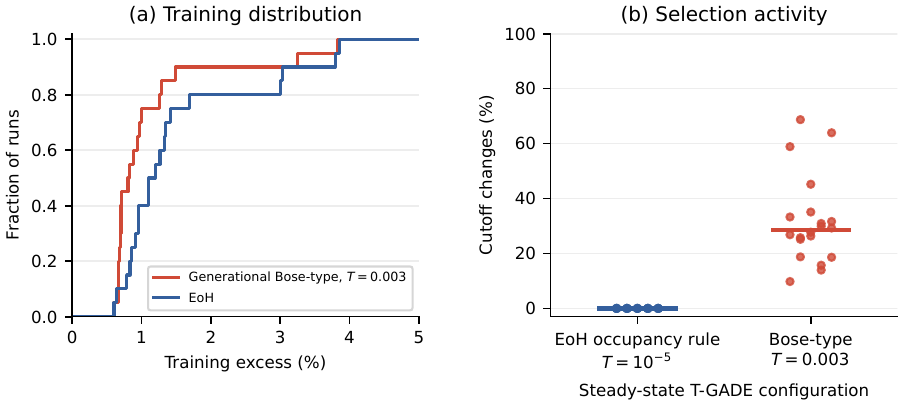}
\caption{Training outcome distributions and steady-state selection activity.}\label{fig:selection-analysis}\end{figure*}
Table~\ref{tab:pooled-training} reports training excess; lower values are better. Generational Bose-type T-GADE at $T=0.003$ achieved the lowest median among the tested configurations. Its median of 0.815\% compares with 1.152\% for EoH, a decrease of 0.337 percentage points, or approximately 29\%. The Mann--Whitney comparison gives $U=124.5$, two-sided $p=0.042$, and Cliff's $\delta=0.378$, supporting improved training performance in this experiment.

The volume lower bound also limits possible improvement by an offline method that knows all items in advance. Since the offline optimum lies between this bound and any feasible bin count, the median relative gap to that optimum is at most 0.815\% for generational Bose-type T-GADE at $T=0.003$, measured over the five training instances in aggregate.

Training excess was at most 1\% in 15/20 runs for this T-GADE configuration and 8/20 for EoH. Excess above 2\% occurred in 2/20 and 4/20 runs, respectively. Increasing the generational Bose-type temperature to 0.03 gave a median of 1.056\%.

Figure~\ref{fig:selection-analysis}(a) plots training excess (horizontal, percent) against the fraction of runs at or below that excess (vertical); both axes are linear. Each curve contains 20 runs. A larger cumulative fraction at the same excess means more runs reached that performance level.

\subsection{Search progress and offspring quality}
Figure~\ref{fig:learning} uses logical generation calls on the linear horizontal axis and best-so-far training excess (percent) on the logarithmic vertical axis. Lines show medians and bands show IQRs across 20 runs for the main configurations and ten matched runs for the additional controls. Calls are aligned by request order, not elapsed time.
\begin{figure*}[t]\centering\includegraphics[width=\textwidth]{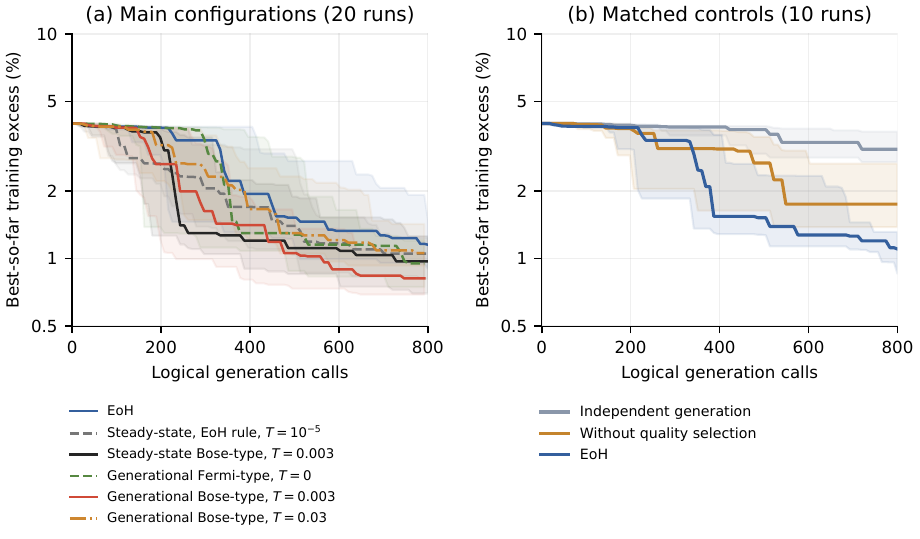}
\caption{Best-so-far training excess during search.}\label{fig:learning}\end{figure*}
The generational Bose-type median at $T=0.003$ reached approximately 1.02\% by 540 calls, 0.84\% by 720 calls, and 0.815\% at 792 calls. The figure tracks historical best values, whereas the generational endpoints in Table~\ref{tab:pooled-training} use final populations. They differ when a historical best individual is not retained.

\begin{table*}[t]\centering\small
\caption{Offspring outcomes for the generational configurations.}
\label{tab:operators}
\begin{tabular}{@{}llrrrrrrr@{}}\toprule
Configuration & Op. & Events & Better (\%) & Equal (\%) & Worse (\%) & Child excess & Parent excess & Run median (\%)\\\midrule
Fermi-type, $T=0$ & E1 & 3304 & 1.2 & 9.2 & 89.6 & 4.251 & 3.315 & 1.2\\
Fermi-type, $T=0$ & E2 & 3869 & 2.1 & 12.2 & 85.7 & 3.984 & 3.220 & 2.0\\
Fermi-type, $T=0$ & M1 & 3699 & 3.2 & 15.2 & 81.5 & 3.984 & 3.340 & 3.3\\
Fermi-type, $T=0$ & M2 & 4187 & 6.2 & 17.7 & 76.1 & 3.984 & 3.612 & 5.8\\
Bose-type, $T=.003$ & E1 & 3259 & 2.8 & 4.4 & 92.8 & 4.377 & 1.871 & 2.4\\
Bose-type, $T=.003$ & E2 & 3817 & 3.4 & 5.0 & 91.6 & 4.186 & 1.781 & 3.3\\
Bose-type, $T=.003$ & M1 & 3672 & 8.3 & 6.8 & 84.9 & 3.994 & 2.042 & 6.8\\
Bose-type, $T=.003$ & M2 & 4207 & 13.0 & 7.0 & 80.0 & 3.984 & 2.113 & 10.8\\
Bose-type, $T=.03$ & E1 & 3220 & 9.7 & 3.5 & 86.7 & 4.412 & 3.572 & 9.1\\
Bose-type, $T=.03$ & E2 & 3853 & 9.8 & 3.5 & 86.7 & 4.226 & 3.360 & 8.6\\
Bose-type, $T=.03$ & M1 & 3660 & 21.1 & 5.3 & 73.6 & 4.226 & 3.864 & 22.1\\
Bose-type, $T=.03$ & M2 & 4208 & 23.5 & 7.3 & 69.2 & 4.226 & 3.924 & 23.3\\
\bottomrule\end{tabular}\end{table*}

Table~\ref{tab:operators} compares each evaluated child with its best recorded parent. Improvement, equality, and worsening use a tolerance of $10^{-9}$ on percentage values. Percentages pool valid parent--child comparisons; the last column is the median improvement fraction across runs. Child and parent excess columns give medians in percent. For generational Bose-type T-GADE at $T=0.003$, 2.8\% of E1 children and 13.0\% of M2 children improved on the best parent; run-level median improvement fractions were 2.4\% and 10.8\%, respectively. Operators use different parent sets and parent counts, so these are not comparisons on identical parents.

\subsection{Steady-state selection activity}
Figure~\ref{fig:selection-analysis}(b) uses the two steady-state T-GADE configurations on the categorical horizontal axis and the percentage of removals that change the quality-only cutoff on the linear vertical axis. Each point is one run, and short bars show medians. Horizontal jitter only separates overlapping points. A change means removing an individual with a strictly lower recorded objective while retaining a higher-objective individual.

Steady-state selection with the EoH occupancy rule at $T=10^{-5}$ made no such changes across 13,032 removals in 20 runs. Bose-type selection at $T=0.003$ changed the cutoff in 4,794 of 15,040 removals, with a median run-level rate of 28.5\% and a range of 9.8--68.7\%. This measurement describes how survival differs from quality truncation; a higher rate is not inherently better performance. The contrast changes both occupancy and temperature.

\subsection{Final populations and individual selection}
\begin{table*}[t]
\caption{Individual selection and final-population composition, 20 runs per configuration.}
\label{tab:pooled-deployment}\centering\small
\begin{tabular}{@{}lrrrrrr@{}}\toprule
 & Training & \multicolumn{3}{c}{Validation} & Test-informed & Transfer excess\\
Configuration & (\%) & $k=2$ (\%) & $k=4$ (\%) & $k=8$ (\%) & $k=8$ (\%) & $>10\%$ slots\\\midrule
Generational Bose-type, $T=0.003$ & 0.962 & 0.496 & 0.496 & 0.496 & 0.497 & 18\%\\
Generational Fermi-type, $T=0$ & 1.055 & 0.900 & 0.900 & 0.744 & 0.696 & 6\%\\
Generational Bose-type, $T=0.03$ & 1.179 & 0.868 & 0.496 & 0.496 & 0.547 & 31\%\\
Steady-state Bose-type, $T=0.003$ & 2.512 & 0.931 & 0.868 & 0.744 & 0.547 & 26\%\\
EoH & 0.589 & 0.496 & 0.496 & 0.496 & 0.522 & 9\%\\
Steady-state, EoH rule, $T=10^{-5}$ & 0.527 & 0.496 & 0.496 & 0.496 & 0.497 & 11\%\\\bottomrule
\end{tabular}\end{table*}
Table~\ref{tab:pooled-deployment} reports transfer excess after choosing one individual and the composition of the final population. Training and validation selection share four scoring instances; test-informed selection uses five for both selection and scoring. Excess values are medians across 20 runs. The last column gives the approximate share of 160 population slots per configuration with mean transfer excess above 10\%.

Figure~\ref{fig:deployment} plots shortlist length $k$ (horizontal) against the median test excess of the validation-selected individual (vertical, percent). Both axes are linear, and lower excess is better. The per-run score averages four test instances before aggregation across 20 runs.
\begin{figure}[t]\centering\includegraphics[width=\columnwidth]{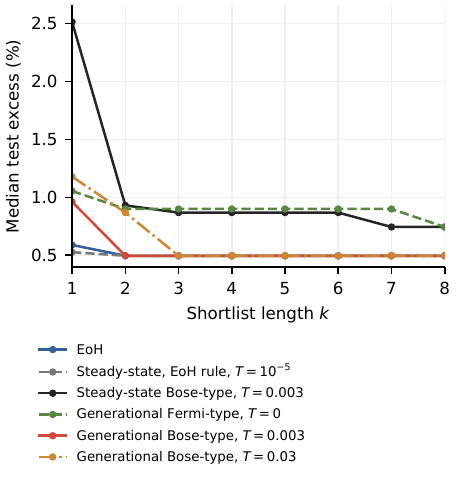}
\caption{Transfer performance after validation-based selection.}\label{fig:deployment}\end{figure}
For generational Bose-type T-GADE at $T=0.003$, median transfer excess decreased from 0.962\% with training selection to 0.496\% when validation chose among the top two candidates. This matched EoH under the same selection procedure and remained unchanged at $k=4$ and $k=8$. Steady-state Bose-type T-GADE improved from 2.512\% at $k=1$ to 0.744\% at $k=8$.

The share of slots with transfer excess above 10\% was approximately 6\% for generational Fermi-type selection, 18\% and 31\% for generational Bose-type selection at temperatures 0.003 and 0.03, and 9\% for EoH. Training quality and the prevalence of transfer-poor individuals therefore exhibited different patterns.

\subsection{Independent generation and evolution without quality selection}
\begin{table}[t]
\caption{Additional training controls, ten runs and 800 calls per configuration.}
\label{tab:sampling-baselines}\centering\small
\begin{tabular}{@{}lrrr@{}}\toprule
Condition & Median (\%) & IQR (pp) & $>2\%$\\\midrule
Independent generation & 3.064 & 0.986 & 9/10\\
Without quality selection & 1.746 & 1.263 & 4/10\\
EoH & 1.102 & 0.458 & 1/10\\\bottomrule
\end{tabular}\end{table}
Table~\ref{tab:sampling-baselines} compares the additional controls on ten matched seeds. Median training excess decreased from independent generation to evolution without quality selection and then to EoH. The control without quality selection changes both parent and survivor selection; it does not isolate either component on its own.

\section{Discussion}
\subsection{Selecting for quality and diversity}
Generational Bose-type T-GADE at $T=0.003$ improved training performance over EoH. Steady-state Bose-type T-GADE retained candidates that quality truncation would discard. These findings show that thermodynamical population management is useful. The additional controls support variation and evaluation-based retention of generated artifacts beyond repeated independent generation.

The two extensions of TDGA separate candidate modification from retention. LLM operators modify explanations and code; survival uses measured quality and diversity, allowing selection to change without replacing the model. Generation-side prompt evolution, as in LLM-LNS, could complement survival-side diversity control. Increasing temperature from 0.003 to 0.03 did not improve generational training performance. Diversity supports useful exploration, not maximization without regard to quality.

\subsection{Occupancy and update form}
Fermi-type occupancy reserves room for distinct genotypes; Bose-type occupancy allows repeated candidates without requiring repetition. Diversity still measures feature overlap with selected individuals. The common objective therefore balances quality-driven concentration and preservation of different candidates.

Intermediate integer occupancy caps define other admissible populations. Occupancy constrains admissibility; temperature weights diversity. The experiment compares genotype exclusion, objective-value exclusion, and unrestricted occupancy under selected update forms and temperatures.

Evaluated candidates affect generation after a batch or each registration. Bose-type repetition is created during generational construction or retained from steady-state inputs, placing both dynamics in one framework.

\subsection{Using retained candidates}
Training optimization and choosing an individual for new conditions are different problems. For generational Bose-type T-GADE at $T=0.003$, validation among the top two training candidates reduced median transfer excess from 0.962\% to 0.496\%. Useful transfer candidates were present even when training rank alone did not select them. This provides a practical reason to retain a population rather than return only one training winner.

Few transfer-poor individuals do not ensure access to the best individual. Generational Fermi-type selection had fewer slots above 10\% transfer excess, yet a higher validation-selected median than generational Bose-type selection at $T=0.003$. Population composition and selected-individual quality are complementary measures. Test-informed selection diagnoses retained quality using information unavailable for deployment.

\subsection{Task suitability and scope}
Online bin packing measures quality and behavior by execution, without LLM fitness judgments. Training-only diversity does not directly reward differences useful at another capacity, plausibly contributing to training--transfer rank differences; causality remains untested. Near low excess, endpoint and candidate selection become consequential.

The task evaluates single heuristics rather than jointly used artifacts. Set-valued tasks, alternative features, broader tasks and models, repeated validation splits, and adaptive temperature warrant study. The present design establishes neither universal superiority nor all interactions among temperature, occupancy, and update form.

T-GADE requires an energy evaluator, diversity features, and LLM-based genetic operators, not program artifacts specifically. Feature blocks can represent structural units, enabling the same survival rule for multi-section documents and plans, graph-structured designs and workflows, and sets of prompts. Evaluation may involve an LLM; measuring quality and diversity is then a central design question.

\subsection{Directions for analyzing explanation--code inheritance}
Stable inheritance under LLM variation merits further analysis. Code-generation nondeterminism~\cite{ouyang2024} and round-trip reconstruction~\cite{rtc2024} are relevant: repeated generation from a fixed explanation can distinguish changes in syntax, behavior, and objective value.

Following semantic entropy~\cite{kuhn2023}, entropy over program groups defined by behavior on fixed inputs would measure behavioral stability rather than textual variety. Valid-code rate, pairwise behavioral disagreement, and objective distributions distinguish stable useful code from consistently poor code. Initial--final population comparisons and code--explanation--code round trips could test improved inheritance alongside immediate quality. These analyses remain future work.

\section{Conclusion}
We proposed T-GADE, extending TDGA through LLM-based genetic operators and artifact-level diversity. One free-energy objective encompasses generational and steady-state updates and Fermi-type and Bose-type occupancy. We established exact one-member removal and conditions for recovering EoH's zero-temperature survival rule.

Across 20 runs per configuration, generational Bose-type T-GADE at $T=0.003$ reduced median training excess by approximately 29\% relative to EoH. Validation among its two highest-ranked final candidates reached EoH's median transfer excess. These results identify population retention and post-search candidate selection as important alongside LLM generation.

T-GADE supports analysis of temperature, occupancy, and update form alongside performance. Future work includes richer structured artifacts, such as multi-section documents and graph-structured designs, task-appropriate diversity representations, adaptive temperature control, and the stability of explanation--code inheritance.

Code: \url{https://github.com/o-ankomochi-o/T-GADE}.

\section*{Acknowledgment}
This work was supported by JSPS KAKENHI Grant Number JP23K11252.

\appendices
\section{Evaluation and Aggregation Details}
\subsection{Seeds, endpoints, and precision}
The main configurations use seeds 20501--20510 and 20601--20610; controls use the latter ten with matching EoH runs. Table~\ref{tab:seed-outcomes} gives run-level training outcomes. Cliff's delta subtracts the fraction of cross-configuration pairs favoring EoH from the fraction favoring T-GADE.

\begin{table}[t]
\centering
\caption{Training excess (percent) for the main comparison.}
\label{tab:seed-outcomes}
\footnotesize
\begin{tabular}{r rr}
\toprule
Seed & Generational Bose-type, $T=.003$ & EoH \\
\midrule
20501 & 0.664051 & 0.955831 \\
20502 & 0.704296 & 3.793138 \\
20503 & 0.684173 & 0.955831 \\
20504 & 0.825033 & 0.784787 \\
20505 & 0.885401 & 1.690311 \\
20506 & 0.714358 & 3.008351 \\
20507 & 1.489083 & 1.197304 \\
20508 & 3.823322 & 1.418654 \\
20509 & 3.249824 & 3.853506 \\
20510 & 1.287856 & 0.855217 \\
\midrule
20601 & 0.674112 & 0.633867 \\
20602 & 0.975953 & 1.267733 \\
20603 & 0.603682 & 1.348224 \\
20604 & 0.804910 & 0.915585 \\
20605 & 0.996076 & 3.038535 \\
20606 & 0.704296 & 1.096690 \\
20607 & 0.945769 & 0.603682 \\
20608 & 0.694235 & 1.328101 \\
20609 & 1.257672 & 0.835094 \\
20610 & 0.664051 & 1.106751 \\
\bottomrule
\end{tabular}
\end{table}

\subsection{Validation and test-informed selection}
Let $x_{u,1},\ldots,x_{u,8}$ be the final population in run $u$, ordered by training objective, and let $q_j(x)$ be fractional excess on transfer instance $j$. With fixed validation index $j_{\mathrm{val}}$, validation selection chooses
\begin{equation}
i^*_{u,k}\in\arg\min_{1\leq i\leq k}q_{j_{\mathrm{val}}}(x_{u,i})\label{eq:validation-choice}
\end{equation}
and reports the median across runs of
\begin{equation}
\frac{1}{4}\sum_{j\ne j_{\mathrm{val}}}q_j(x_{u,i^*_{u,k}}).\label{eq:validation-score}
\end{equation}
Training selection instead uses $i=1$. Test-informed selection chooses
\begin{equation}
i^{\mathrm{test}}_{u,k}\in\arg\min_{1\leq i\leq k}\frac{1}{5}\sum_{j=1}^{5}q_j(x_{u,i})\label{eq:oracle-choice}
\end{equation}
and reports the median of that same five-instance mean. Ties preserve training order. These definitions correspond to Table~\ref{tab:pooled-deployment}; test-informed selection chooses one candidate for all five instances, not a different candidate for each instance.

\subsection{Generation and evaluation accounting}
Each admitted LLM generation request consumes one budget unit, including requests whose outputs cannot be parsed or evaluated. In the steady-state loop, requests to regenerate failed extractions or duplicate code also count toward the 800-request cap. Generational runs make one request per planned child, giving $88\times9=792$ requests; unsuccessful proposals leave vacancies. Requests refused after the cap incur no model call. Transport retries are counted separately. Reevaluation of retained parents and final populations executes code without LLM generation.

\section{Schur-Complement Computation}\label{sec:schur}
For a selected prefix $\mathcal{P}$ of size $n\geq1$, let $R_\ell=(G_\ell(\mathcal{P})+\varepsilon I_n)^{-1}$ and $c_\ell(y)=X_\ell(\mathcal{P})\phi_\ell(y)$. Its Schur complement is
\begin{equation}
d_\ell(y)=\|\phi_\ell(y)\|^2+\varepsilon-c_\ell(y)^\top R_\ell c_\ell(y).
\label{eq:schur}
\end{equation}
The unnormalized log-determinant gain of block $\ell$ is $\log d_\ell(y)$. Hence the normalized diversity increment is
\begin{equation}
\begin{aligned}
&H(\mathcal{P}\uplus\{y\})-H(\mathcal{P})\\
&\qquad=\frac{\sum_{\ell=1}^{m}\log d_\ell(y)-H(\mathcal{P})}{n+1}.
\end{aligned}\label{eq:diversity-increment}
\end{equation}
Together with the mean-energy increment $(E(y)-\langle E\rangle_{\mathcal{P}})/(n+1)$, this gives~\eqref{eq:increment}.
After selecting $y$, write $v_\ell=R_\ell c_\ell(y)$ and update
\begin{equation}
R'_\ell=\begin{bmatrix}
R_\ell+v_\ell v_\ell^\top/d_\ell&-v_\ell/d_\ell\\
-v_\ell^\top/d_\ell&1/d_\ell
\end{bmatrix}.\label{eq:inverse-update}
\end{equation}
Regularization makes each Gram matrix positive definite, so $d_\ell>0$ in exact arithmetic. Its logarithm and the normalized diversity increment may be zero or negative. Equation~\eqref{eq:increment} evaluates these quantities directly, without division by a diversity increment. Given feature inner products, each increment takes $O(n^2)$ operations per block.

\bibliographystyle{IEEEtran}
\bibliography{refs}

\begin{thebibliography}{10}
\providecommand{\url}[1]{#1}
\csname url@samestyle\endcsname
\providecommand{\newblock}{\relax}
\providecommand{\bibinfo}[2]{#2}
\providecommand{\BIBentrySTDinterwordspacing}{\spaceskip=0pt\relax}
\providecommand{\BIBentryALTinterwordstretchfactor}{4}
\providecommand{\BIBentryALTinterwordspacing}{\spaceskip=\fontdimen2\font plus
\BIBentryALTinterwordstretchfactor\fontdimen3\font minus
  \fontdimen4\font\relax}
\providecommand{\BIBforeignlanguage}[2]{{%
\expandafter\ifx\csname l@#1\endcsname\relax
\typeout{** WARNING: IEEEtran.bst: No hyphenation pattern has been}%
\typeout{** loaded for the language `#1'. Using the pattern for}%
\typeout{** the default language instead.}%
\else
\language=\csname l@#1\endcsname
\fi
#2}}
\providecommand{\BIBdecl}{\relax}
\BIBdecl

\bibitem{elm2023}
J.~Lehman, J.~Gordon, S.~Jain, K.~Ndousse, C.~Yeh, and K.~O. Stanley,
  ``Evolution through large models,'' in \emph{Handbook of Evolutionary Machine
  Learning}.\hskip 1em plus 0.5em minus 0.4em\relax Springer, 2023, pp.
  331--366.

\bibitem{evoprompt2024}
\BIBentryALTinterwordspacing
Q.~Guo, R.~Wang, J.~Guo, B.~Li, K.~Song, X.~Tan, G.~Liu, J.~Bian, and Y.~Yang,
  ``Connecting large language models with evolutionary algorithms yields
  powerful prompt optimizers,'' in \emph{International Conference on Learning
  Representations}, 2024. [Online]. Available:
  \url{https://proceedings.iclr.cc/paper_files/paper/2024/hash/9156b0f6dfa9bbd18c79cc459ef5d61c-Abstract-Conference.html}
\BIBentrySTDinterwordspacing

\bibitem{promptbreeder2024}
\BIBentryALTinterwordspacing
C.~Fernando, D.~S. Banarse, H.~Michalewski, S.~Osindero, and
  T.~Rockt{\"a}schel, ``Promptbreeder: Self-referential self-improvement via
  prompt evolution,'' in \emph{Proceedings of the 41st International Conference
  on Machine Learning}, ser. Proceedings of Machine Learning Research, vol.
  235, 2024, pp. 13\,481--13\,544. [Online]. Available:
  \url{https://proceedings.mlr.press/v235/fernando24a.html}
\BIBentrySTDinterwordspacing

\bibitem{mori1995}
N.~Mori, J.~Yoshida, H.~Tamaki, H.~Kita, and Y.~Nishikawa, ``A thermodynamical
  selection rule for the genetic algorithm,'' in \emph{Proceedings of the 1995
  IEEE International Conference on Evolutionary Computation}, vol.~1, 1995, pp.
  188--192.

\bibitem{dejong1993gap}
K.~A. De~Jong and J.~Sarma, ``Generation gaps revisited,'' in \emph{Foundations
  of Genetic Algorithms}.\hskip 1em plus 0.5em minus 0.4em\relax Morgan
  Kaufmann, 1993, vol.~2, pp. 19--28.

\bibitem{eoh2024}
\BIBentryALTinterwordspacing
F.~Liu, X.~Tong, M.~Yuan, X.~Lin, F.~Luo, Z.~Wang, Z.~Lu, and Q.~Zhang,
  ``Evolution of heuristics: Towards efficient automatic algorithm design using
  large language model,'' in \emph{Proceedings of the 41st International
  Conference on Machine Learning}, ser. Proceedings of Machine Learning
  Research, vol. 235, 2024, pp. 32\,201--32\,223. [Online]. Available:
  \url{https://proceedings.mlr.press/v235/liu24bs.html}
\BIBentrySTDinterwordspacing

\bibitem{akiba2025merge}
T.~Akiba, M.~Shing, Y.~Tang, Q.~Sun, and D.~Ha, ``Evolutionary optimization of
  model merging recipes,'' \emph{Nature Machine Intelligence}, vol.~7, pp.
  195--204, 2025.

\bibitem{funsearch2024}
B.~Romera-Paredes, M.~Barekatain, A.~Novikov, M.~Balog, M.~P. Kumar, E.~Dupont,
  F.~J.~R. Ruiz, J.~S. Ellenberg, P.~Wang, O.~Fawzi, P.~Kohli, and A.~Fawzi,
  ``Mathematical discoveries from program search with large language models,''
  \emph{Nature}, vol. 625, pp. 468--475, 2024.

\bibitem{reevo2024}
H.~Ye, J.~Wang, Z.~Cao, F.~Berto, C.~Hua, H.~Kim, J.~Park, and G.~Song,
  ``{ReEvo}: Large language models as hyper-heuristics with reflective
  evolution,'' in \emph{Advances in Neural Information Processing Systems},
  vol.~37, 2024.

\bibitem{llamea2025}
\BIBentryALTinterwordspacing
N.~van Stein and T.~B{\"a}ck, ``{LLaMEA}: A large language model evolutionary
  algorithm for automatically generating metaheuristics,'' 2025, version 4.
  [Online]. Available: \url{https://arxiv.org/abs/2405.20132v4}
\BIBentrySTDinterwordspacing

\bibitem{hsevo2025}
{Pham Vu Tuan Dat}, {Long Doan}, and {Huynh Thi Thanh Binh}, ``{HSEvo}:
  Elevating automatic heuristic design with diversity-driven harmony search and
  genetic algorithm using {LLMs},'' in \emph{Proceedings of the AAAI Conference
  on Artificial Intelligence}, vol.~39, no.~25, 2025, pp. 26\,931--26\,938.

\bibitem{mctsahd2025}
\BIBentryALTinterwordspacing
Z.~Zheng, Z.~Xie, Z.~Wang, and B.~Hooi, ``Monte carlo tree search for
  comprehensive exploration in {LLM}-based automatic heuristic design,'' in
  \emph{Proceedings of the 42nd International Conference on Machine Learning},
  ser. Proceedings of Machine Learning Research, vol. 267, 2025, pp.
  78\,338--78\,373. [Online]. Available:
  \url{https://proceedings.mlr.press/v267/zheng25o.html}
\BIBentrySTDinterwordspacing

\bibitem{eohs2026}
F.~Liu, Y.~Liu, Q.~Zhang, X.~Tong, and M.~Yuan, ``{EoH-S}: Evolution of
  heuristic set using {LLMs} for automated heuristic design,'' in
  \emph{Proceedings of the AAAI Conference on Artificial Intelligence},
  vol.~40, no.~43, 2026, pp. 37\,090--37\,098.

\bibitem{llmlns2025}
\BIBentryALTinterwordspacing
H.~Ye, H.~Xu, A.~Yan, and Y.~Cheng, ``Large language model-driven large
  neighborhood search for large-scale {MILP} problems,'' in \emph{Proceedings
  of the 42nd International Conference on Machine Learning}, ser. Proceedings
  of Machine Learning Research, vol. 267, 2025, pp. 72\,131--72\,180. [Online].
  Available: \url{https://proceedings.mlr.press/v267/ye25j.html}
\BIBentrySTDinterwordspacing

\bibitem{novelty2011}
J.~Lehman and K.~O. Stanley, ``Abandoning objectives: Evolution through the
  search for novelty alone,'' \emph{Evolutionary Computation}, vol.~19, no.~2,
  pp. 189--223, 2011.

\bibitem{mapelites2015}
\BIBentryALTinterwordspacing
J.-B. Mouret and J.~Clune, ``Illuminating search spaces by mapping elites,''
  2015. [Online]. Available: \url{https://arxiv.org/abs/1504.04909}
\BIBentrySTDinterwordspacing

\bibitem{incontextqd2024}
\BIBentryALTinterwordspacing
B.~Lim, M.~Flageat, and A.~Cully, ``Large language models as in-context {AI}
  generators for quality-diversity,'' 2024. [Online]. Available:
  \url{https://arxiv.org/abs/2404.15794}
\BIBentrySTDinterwordspacing

\bibitem{dpp2012}
\BIBentryALTinterwordspacing
A.~Kulesza and B.~Taskar, ``Determinantal point processes for machine
  learning,'' 2012. [Online]. Available: \url{https://arxiv.org/abs/1207.6083}
\BIBentrySTDinterwordspacing

\bibitem{mori1996}
N.~Mori, H.~Kita, and Y.~Nishikawa, ``Adaptation to a changing environment by
  means of the thermodynamical genetic algorithm,'' in \emph{Parallel Problem
  Solving from Nature---PPSN IV}, ser. Lecture Notes in Computer Science, vol.
  1141.\hskip 1em plus 0.5em minus 0.4em\relax Springer, 1996, pp. 513--522.

\bibitem{mori1998}
------, ``Adaptation to a changing environment by means of the feedback
  thermodynamical genetic algorithm,'' in \emph{Parallel Problem Solving from
  Nature---PPSN V}, ser. Lecture Notes in Computer Science, vol. 1498.\hskip
  1em plus 0.5em minus 0.4em\relax Springer, 1998, pp. 149--158.

\bibitem{ouyang2024}
\BIBentryALTinterwordspacing
S.~Ouyang, J.~M. Zhang, M.~Harman, and M.~Wang, ``An empirical study of the
  non-determinism of {ChatGPT} in code generation,'' 2024, version 2; first
  posted in 2023. [Online]. Available: \url{https://arxiv.org/abs/2308.02828v2}
\BIBentrySTDinterwordspacing

\bibitem{rtc2024}
\BIBentryALTinterwordspacing
M.~Allamanis, S.~Panthaplackel, and P.~Yin, ``Unsupervised evaluation of code
  {LLMs} with round-trip correctness,'' in \emph{Proceedings of the 41st
  International Conference on Machine Learning}, ser. Proceedings of Machine
  Learning Research, vol. 235, 2024, pp. 1050--1066. [Online]. Available:
  \url{https://proceedings.mlr.press/v235/allamanis24a.html}
\BIBentrySTDinterwordspacing

\bibitem{kuhn2023}
\BIBentryALTinterwordspacing
L.~Kuhn, Y.~Gal, and S.~Farquhar, ``Semantic uncertainty: Linguistic
  invariances for uncertainty estimation in natural language generation,'' in
  \emph{International Conference on Learning Representations}, 2023. [Online].
  Available: \url{https://arxiv.org/abs/2302.09664}
\BIBentrySTDinterwordspacing

\end{thebibliography}
\end{document}